\let\savedsemi\; 
\providecommand{\U}[1]{\protect\rule{.1in}{.1in}}
\providecommand{\U}[1]{\protect\rule{.1in}{.1in}}
\providecommand{\U}[1]{\protect\rule{.1in}{.1in}}
\providecommand{\U}[1]{\protect\rule{.1in}{.1in}}
\newtheorem{theorem}{Theorem}[section]
\newtheorem{proposition}[theorem]{Proposition}
\theoremstyle{definition}
\newtheorem{remark}[theorem]{Remark}
\newtheorem{definition}[theorem]{Definition}
\begin{document}	
	\sloppy 
	
	\title[Deriving dynamical systems for language based on the Tolerance Principle]{Deriving dynamical systems for language based on the Tolerance Principle}
	
	\author[F. C. Alves]{Fernando C. Alves}
	\thanks{Corresponding author's emails: fernando.alves@academico.ufpb.br; cabralalvesf@gmail.com}  
	
	\keywords{Language Change; Language Acquisition; Tolerance Principle; Language Dynamics}

	\begin{abstract}
		 In this research note, I derive explicit dynamical systems for language within an acquisition-driven framework (Niyogi \& Berwick, 1997; Niyogi, 2006) assuming that children/learners follow the Tolerance Principle (Yang, 2016) to determine whether a rule is productive during the process of language acquisition. I consider different theoretical parameters such as population size (finite vs. infinite) and the number of previous generations that provide learners with data. Multiple simulations of the dynamics obtained here and applications to diachronic language data are in preparation, so they are not included in this first note.    
	\end{abstract}
	\maketitle

	\section{Introduction}
		
	The study of language change involves two major challenges: high complexity and difficulty to obtain systematic data. A unified account of the mechanisms of linguistic change has elluded historical linguists for centuries, from early indo-europeanists to the neogrammarians and modern linguists. Labov's seminal three-volume series on the principles of linguistic change offers a non-exhaustive study into the variety of forces driving and shaping it (1994, 2001, 2007). After decades of research, Labov argues that "The phenomenon we are studying is irrational, violent, and unpredictable. To develop principles of language change might therefore seem a quixotic undertaking, as many students of change have concluded." (1994, p.10). The methodological difficulty is self-evident. Due to the time scales involved, keeping track of specific variables over multiple generations is not usually a viable option, so systematic diachronic data are scarce and indirect methods are usually employed. These may be the main combined reasons why the first explicit dynamical systems of linguistic change were only developed in the 1990's (Niyogi \& Berwick, 1995, 1997; see Niyogi, 2006, pp.43-44), whereas other core facts about languages, such as recursion and acquisition, had been subject to mathematical analysis decades earlier (e.g. Chomsky, 1957, 1959; Gold, 1967).  
	
	Despite all difficulties, the unwavering efforts of numerous researchers from the past and the present have provided considerable empirical evidence of the qualitative nature of language dynamics, and also theoretical insights that serve as the basis for formal models. The past thirty years have seen an insurgence of computational approaches to linguistic change, combining techniques originated in a variety of disciplines, such as evolutionary biology, statistical mechanics, probability theory, computer science and linguistics (e.g. Clark \& Roberts, 1992; Niyogi \& Berwick, 1995, 1997;  Blythe \& Croft, 2012; Kiparsky, 2012; Siva, Tao, \& Marcolli, 2017; Burridge, 2017; Burridge \& Blaxter, 2021; Hayes, 2022; Ringe \& Yang, 2022). A serious survey of these various approaches and topics is far beyond our current scope and shall be carried out in another text.
	 
	Early and persisting work by generative phonologists trained in historical linguistics advanced a foundation for computational models of linguistic change (see Kiparsky, 1965, and references therein; see also the discussion on imperfect learning in Section \ref{NBsection} below). However, from that point up until the 1990's, change had received an increasingly peripherical theoretical status within the generative research program. For example, the investigation of the phenomenom is not even listed as a research problem in Chomsky's \textit{Knowledge of Language: Its Nature, Origin, and Use} (1986, p.3). As explained by Labov (2001, p.3), who set his research from the opposite end,   
	\vspace{0.5cm}
	\begin{quote}
		The search for a universal,	unchanging, and indeed unchangeable, grammar is oriented in an entirely
		different direction. As such, it lies outside the scope of this work, which is concerned with everything in language that changes or has changed. This seems to include much the larger part of linguistic categories, structures,	and substance. It therefore seems natural to ask whether we understand the forces that are responsible for the extraordinary transformations that
		affect all but a bare skeleton of abstract relations.
	\end{quote}
	\vspace{0.5cm}
	
	Chomsky's program to identify the simple defining properties of natural languages that can be plausibily accounted for by the evolutionary biology of our species has been the core problem of generative linguistics (see Berwick and Chomsky, 2016, for a recent exposition). Nevertheless, although the origin of human language with its basic unchangeable properties precedes its evolution thenceforth at a conceptual level, their models will necessarily need to support each other. If the unchanging computational basis that sets human languages apart from other systems used semiotically is some sort of merge operation that applies recursively, generating hierarchical structures that affect the semantic interpretation of the combined parts and that are blind to linear order, then language acquisition, processing and also change need to be accounted for solely on this innate basis. Everything else must emerge from its interaction with other domains. The Principles and Parameters framework (Chomsky, 1982; Chomsky \& Lasnik, 1993) understands this crucial direction for validation. Perhaps not surprisingly, it is from within P\&P that explicit dynamical systems start to appear in linguistics (Niyogi \& Berwick, 1995). That said, any theory of linguistic change needs to start from some basic conception of language structure and its nature, with each having different implications for the theory, as illustrated by Labov's critical reflection on the consequences of a strict functionalist view of language evolution (1994, p.9): 
	\vspace{0.5cm}
	\begin{quote}
		If language had evolved in the course of human history as an instrument of communication and been well adapted to that need, one of its most important properties would be stability. No matter how difficult a language was to learn, it would be easier to learn if it were stable than if it continued to change, and no matter how useful a system of communication was, it would be more useful if it could be used to communicate with a neighboring group without learning a new system. The fact of language change is difficult to reconcile with the notion of a system adapted to communication, unless we identify other pathological features inherent in language
		that limit this adaptation.
	\end{quote}     
	\vspace{0.5cm}
	
	Although the research practice of those focusing on the different phenomena of language origin and change may look very different, their proposed solutions to problems ultimately need to be combined into a single theory. It will be maintained in this series of notes, as a general working thesis, that the study of language acquisition is the only bridge capable of connecting those theoretical ends. The general picture underlying this thesis is straightforward, children are initially equipped with the same basic language faculty since its biocognitive origin in the species but the process of language acquisition is also the main mechanism through which linguistic change occurs and propagates at all other levels over different generations \footnote{With the exception of the more conventional aspects of the lexicon, such as the form-meaning association, but also including the argument structure and other morphosyntactic and morphophonological aspects.}.          
		
	In these notes, we work within the acquisition-driven framework for language change put forward by Partha Niyogi and Robert Berwick, in a series of works beginning in the 1990's (Niyogi \& Berwick, 1995, 1997, 2009). The main results, methods, and future research topics of that program came together as a general framework in \textit{The Computational Nature of Language Learning and Evolution} (Niyogi, 2006; henceforth, the NB framework). Our main goal here, which is part of a larger project, is to derive explicit dynamics for a population of learners that follow the Tolerance Principle (Yang, 2016, 2018). This principle has been recently proposed as as a key component of children's cognition in determining the productivity of any linguistic rule (Yang, 2016, 2018). Multiple simulations and applications to case studies will be reserved to future notes. Here, we derive the base dynamics. The NB framework and the Tolerance Principle are presented in sections \ref{NBsection} and \ref{TPsection}, respectively. Section \ref{mainsection} brings the two together for our main analysis.       
	
	\section{An acquisition-driven framework for language change} \label{NBsection}
	
	The NB framework is set out to investigate how slight individual differences in the process of language acquisition can affect the linguistic composition of a population over time. The basic logical premise is that children must infer languages that are slightly different from those of the previous generations, otherwise, "Perfect language acquisition would imply perfect transmission. Children would acquire perfectly the language of their parents, language would be mirrored perfectly in successive generations, and languages would not change with time." (Niyogi, 2006, p.155). In the modern literature, the idea of imperfect learning as a causal force for linguistic change appears at least a few decades earlier in Kiparsky's thesis on \textit{Phonological Change} (1965). The NB framework offers a very general mathematical setting in which the dynamics of the linguistic composition for a particular population of learners can be analysed. The formulation of explicit acquisition-driven dynamical systems for language within this framework not only serves to test the adequacy of different grammars and learning algorithms by examining their evolutionary consequences but also, and conversely, allows the investigation of how much acquistion can account for the propagation and properties of the real dynamics observed in historical data. Social networks, spatial models, and several other factors can also be built from the base models (see Niyogi, 2006, Chapter 10).   
	
	In this section I present a general mathematical description of the NB framework.
	
	Let $C$ denote a community of speakers and $Card(\mathcal{L})$ be the size of a language space $\mathcal{L}$ under consideration. $\mathcal{L}$ is usually the set of languages/grammars considered to be avaliable for a child to hypothesize in a given study. In a Principles and Parameters setting, for example, if one is working with $N$ parameters, then $Card(\mathcal{L})=2^{N}$. In general mathematical terms, the NB framework could be summarized as a map from five major theoretical parameters (that I comment on below) to a discrete dynamical system whose state space is the set of all possible linguistic compositions for $C$ (i.e. the proportion of speakers of each language $L_{i} \in \mathcal{L}$). More formally, the state space is the simplex subset $\{\mathcal{P} \unboldmath \in \mathbb{R}^{Card(\mathcal{L})}: \sum_{i=1}^{Card(\mathcal{L})}\mathcal{P}_{i}=1 \}$, where $\mathcal{P}_{i}$ represents the proportion of $L_{i}$ speakers in the community $C$, with $i=1,...,Card(\mathcal{L})$. Therefore, under certain theoretical hypotheses that take the form of the alluded parameters, the framework allows the derivation of dynamical systems describing how the linguistic composition $\mathcal{P}$ changes over generations of $C$.
	
	To introduce the main concepts involved in the framework, let us first establish a few definitions. We shall understand language as a formal language over some fixed non-empty finite alphabet $A$ and denote by $A^{*}$ the universe of all possible finite strings of elements from $A$. Therefore, the collection of all possible data sets is given by $\mathcal{D}=\bigcup_{n=1}^{\infty}(A^{*})^{n}$. The general parameters are:  \\
	
	\begin{itemize}[noitemsep,nolistsep]
		\item[(a)] The language space $\mathcal{L}$.
		\item[(b)] The learner $\mathcal{A}$, representing the child, which is defined as an effective procedure from $\mathcal{D}$ to $\mathcal{L}$ (technically, a partial recursive function if one assumes the Church-Turing thesis).
		\item[(c)] The notion of convergence to a language, that is, the precise definition of learning being adopted.  
		\item[(d)] The population size, $Card(Pop)$. 
		\item[(e)] The probability distribution $P$ according to which data is drawn and presented to the learners.
	\end{itemize}         
	\vspace{0.5cm}
	Note that each of these concepts is not only very general but also depends on several other subparameters to be well-defined. To give one important example I will return to later, the distribution $P$ depends on the linguistic cohort, that is, the number of different generations assumed to provide the learner with data. Another subparameter is the representation of linguistic elements and the grammar (e.g. optimality-theoretical rankings, rewriting rules) used to specify $\mathcal{L}$ and $\mathcal{A}$. The central procedure to generate an update rule in the state space from these components is to determine a mathematical relation between the average behavior of the individual learner to the composition of the population: 
	
	\begin{equation} \label{mainheuristics}
		\mathcal{P}_{i,t+1} \thicksim \mathbb{P}(\mathcal{A}_{t+1}=L_{i}),
	\end{equation}
	where $\mathcal{P}_{i,t+1}$ denotes the proportion of $L_{i}$ speakers in generation $t+1$ and $\mathbb{P}(\mathcal{A}_{t+1}=L_{i})$ is the probability that a learner from generation $t+1$ will acquire $L_{i}$ (which, in turn, is a value that depends on the composition of the previous generations that provide the learner with input and other factors). The explicitation of such a relation is achieved by working under precise mathematical hypotheses regarding those components, to varying degrees of realism. Niyogi and collaborators, mainly R. Berwick, have explored many important variations of those five parameters. They were able to deduce explicit relations for (\ref{mainheuristics}) and carry out full mathematical analyses in several such cases. Even some of the most idealized models (see Niyogi and Berwick, 1997; Niyogi, 2006, Chapter 5) resulted in relevant dynamical behavior observed in historical linguistics, such as the important $S-$shaped curves (see Labov, 2001; Blythe \& Croft, 2012; Hayes, 2022). Nevertheless, more realistic assumptions on the components, such as finite $Card(Pop)$, often lead to dynamics that are difficult to determine or examine analytically, thus often requiring the use of numerical simulations. 
	
	Niyogi's monograph (2006) not only summarized the main results of this framework but also set up research topics with clear future directions for the investigation of different parameters and possible expansions of the framework. More recently, the Tolerance Principle has been proposed as a component of children's cognitive system governing the productivity of all linguistic rules (Yang, 2016). I propose to investigate the dynamical implications of a population of Tolerance Principle learners (Definition \ref{TPlearners}) as a parametric choice within the NB framework (Section \ref{mainsection}). Before we turn to that, let us first present the main component of our analysis.

	\section{The Tolerance Principle and Language Acquisition} \label{TPsection}
	
	Henceforth, a linguistic rule $R$ will be understood in great generality as an IF-THEN statement that evaluates a decidable condition $A$, which, if met by an object $x$, maps it to $R(x)$. Schematically:
	\[
	R: \text{If} \; A(x), \; \text{then} \; R(x) 
	\]  
	Thus, rules lead to functions that to be well-defined need a representation of the elements that can be evaluated within some formal system, the decidable condition $A$ which specifies the domain for rule application, and the codomain specified by $R$. In this sense, learning a rule pressuposes a representation/structural description of the elements evaluated by $A$, the specification of $A$ within that formal system, and the association to be made with each of the elements that satisfy that condition. Therefore, at this level of generality, rules are presented without commiting to a particular theory of what the representations of linguistic elements should be. This serves as a framework for any particular theory of linguistic rules. 
	
	However innocuos and acceptable a description may seem, it always involves a particular abstract model, so it is important to remark that, unless otherwise stated, any examples of rules I provide in this series of notes are given informally for quick illustration, and not in defense of a particular theoretical proposal of, say, (morpho)phonology or (morpho)syntax. For instance, the phonemic symbols below can be understood as place holders for feature matrices or some other structural representation from different phonological theories (feature geometry, gestures and so forth). I leave it unspecified. In future notes we will evaluate specific theoretical proposals and their implications for acquisition and change, but this will be made clear then. That said, let us look at some informal examples for morphophonological rules involving plural forms: \\
	
	\begin{itemize}
		\item[a.] (Plural for English \textit{cat}):
		If $x=\text{\textipa{k\ae t}}$, then add $\text{\textipa{s}}$ to the end of the string. 
		\item[b.] (Plural for English \textit{dog}):
		If $x=\text{\textipa{d\textturnscripta g}}$, then add $\text{\textipa{z}}$ to the end of the string. 
		\item[c.] (Plural for English \textit{cactus}):
		If $x=\text{\textipa{k\ae kt\textschwa s}}$, then remove $\text{\textipa{\textschwa s}}$ and replace with $\text{\textipa{ai}}$. 
		\item[d.] (Plural for English \textit{-oo-} nouns):
		If $x=\alpha \text{\textipa{\textupsilon}} \beta$ or $x=\alpha \text{\textipa{u:}} \beta$, then $\alpha \text{\textipa{i:}} \beta$
		\item[e.] (Plural for English, Catalan, Spanish, and Portuguese nouns):
		If $x$ is a noun, then add $s$ to the end of the string. \\
	\end{itemize} 
	
	There is vast empirical evidence that in the process of language acquisition, children induce and apply certain rules to overgeneralizaed domains that go beyond the data provided by adults from their linguistic communities, whereas some other rules (almost) never apply to forms for which children have no positive evidence. The former type of rule is called \textit{productive}, as opposed to the latter. Furthermore, children are conservative even in the overapplication of productive rules, matching closely the positive data (see Yang 2016 and references therein for a variety of crosslinguistic evidence).   
	
	So what makes a rule productive? In other words, how do children sort out productive rules from unproductive ones? 
	The Tolerance Principle (Yang 2016), henceforth TP, is a proposed solution to this long-standing problem. Not only that, this is a precise quantitative proposal with measurable effects, arguably the first candidate to a quantitative law in linguistic theory. The final form for the TP is presented as follows (Yang, 2016, p.64):
		
	\begin{equation} \label{TP}
	\hspace*{-0.1cm} 
	\begin{split} 
	& \textit{Tolerance Principle} \\ 
	&	\text{Let} \; R \; \text{be a rule applicable to} \; N \; \text{itens, of which} \; e \; \text{are exceptions.} \\ &\text{Then,} \; R \; \text{is productive if and only if}\\ & \hspace{3.5cm}  e \leq \theta_{N}:=N/\ln N.  
	\end{split}  		 
	\end{equation}
	
	An \textit{exception} to a rule $R$ is an item that satisfies the condition for the application of $R$, but the rule does not actually apply to that item in the language (e.g. rule $d.$ above does not apply to \textit{book} nor \textit{cook}). Rules such as $a.$, $b.$ and $c.$, that refer to a single element and hold no exceptions have an important role in the derivation of the TP. We will refer to them as \textit{listings}.
	
	The heart of the TP derivation in its most basic form, which is not the asymptotic version presented in (\ref{TP}), is the Elsewhere Condition Model for language processing (Yang, 2016, p.49 and references therein). For a rule $R$, the Elsewhere Condition Model establishes a serial search procedure in which exception listings for $R$ are listed prior to $R$ and are ordered by frequency.
	
	\begin{definition} \label{ECA}(Elsewhere Condition Model)
	Let $R$ be a rule applicable to $N$ itens in a data set $D$ for a language $L$. Let $\{e_{1}, ...,e_{m}\}$ be exceptions to $R$ listed by their order of frequency and $\{r_{1},...,r_{n}\}$ be the itens to which $R$ actually applies in $L$, with $m+n=N$. Then, the \textit{Elsewhere Condition Model} for $R$ is defined as:\\
	
	\noindent If $x=e_{1}$, then... \\
	\vdots \\
	If $x=e_{m}$, then... \\
	$R$. \\
	\end{definition} 
	
	\begin{remark} (Ranked Listing Model) Any Elsewhere Condition algorithm is weakly equivalent to serial listing. Indeed, using the same notation from definition \ref{ECA}, one can obtain the same input-output pairs by listing the association for each item individually: \\
		
	\noindent If $x=e_{1}$, then... \\
	\vdots \\
	If $x=e_{m}$, then... \\
	If $x=r_{1}$, then $R(x)$. \\
	\vdots \\
	If $x=r_{n}$, then $R(x)$. \\
	
	We can then \textbf{rearrange} this list following the order of frequency of all $N$ itens (with a possible mixture of the $e_{i}$ and $r_{j}$). Let us refer to this final listing rearranged by frequency as a \textit{Ranked Listing Model}. 
	\end{remark}

	Let $T(N,e)$ and $T(N,N)$ denote the expected time complexity of the Elsewhere Condition Model and its associated Ranked Listing Model, respectively. In its purest form, the Tolerance Principle is a conjecture that children opt for the model with the smaller expected time of access to a form (Yang, 2016, p.61):
	
	\begin{equation} \label{TPbaseform}
	\hspace*{-0.1cm} 
	\begin{split} 
		& \textit{Tolerance Principle (Base Form)} \\ 
		&	\text{A rule} \; R \; \text{is productive if} \; T(N,e)<T(N,N) \; \text{; otherwise} \; R \; \text{is unproductive} \\
	\end{split} 
	\end{equation}

	To derive the closed form in (\ref{TP}) from (\ref{TPbaseform}), two mathematical assumptions are held. The first, used to calculate $T(N,e)$ and $T(N,N)$ explicitly, is that the $N$ itens ranked by frequency follow the empirical Zipf's law (Zipf 1936, 1949; ), which states that the frequency $f_{i}$ of the $i$-th element holds an inverse relation with its rank:
	\[
	f_{i}= \frac{1/i^{s}}{\sum_{k=1}^{N}1/k^{s}}
	\]   
	Yang (2016, p.62) holds $s=1$, that is, a proper inverse proportion between rank and frequency, thus yielding $T(N,N)=\frac{N}{H_{N}}$ and $T(N,e)=\frac{e}{N}\frac{e}{H_{e}}+(1-\frac{e}{N})e$, where $H_{j}$ is the $j$-th Harmonic Number $\sum_{k=1}^{j}\frac{1}{k}$. Therefore, under Zipf's Law with exponent $1$, the base Tolerance Principle implies that
	\begin{equation} \label{TPintermediateform}
		\hspace*{-0.1cm} 
		\begin{split} 
			& \textit{Tolerance Principle (Intermediate Form - under Zipf's law)} \\ 
			&	\text{A rule} \; R \; \text{is productive if} \; \frac{e}{N}\frac{e}{H_{e}}+(1-\frac{e}{N})e<\frac{N}{H_{N}} \; \text{; otherwise} \; R \; \text{is unproductive} \\
		\end{split} 
	\end{equation}
	
	The second mathematical assumption used to produce an analytical solution is to work with large $N$. Following this strategy with Sam Gutmann, Yang (2016, p.63) replaces the Harmonic Number by the asymptotically equivalent natural logarithm (i.e. $\lim_{N \to \infty} \frac{H_{N}}{ln N}=1$) in $T(N,e)=T(N,N)$ and solves for an expression that makes the equality hold in the limit as $N$ increases. This led to the final form in (\ref{TP}), which theoretically speaking, should only work as an approximate threshold of productivity for large lists of itens (large $N$). However, experimental studies with aritifical languages suggest that $N/ln N$ is a categorical threshold even for small $N$ (e.g. $N=9$) (see Schuler, Yang, \& Newport 2016, 2021). As a mathematical side note, it is interesting to observe that $N/ln N$ is asymptotically equivalent to the number of primes smaller or equal to $N$, as stated by the Prime Number Theorem.   
	
	The reader is referred to Yang (2016) for the complete argument, empirical motivation, and for the analysis of a wide range of acquisition data that receive a unified account under the Tolerance Principle (henceforth TP). In these notes, we mainly pursue the implications of the TP, as a proposed component of children's cognition, for language change. This direction of research is also laid out by Yang later in his book (2016, p.157) (see also Ringe and Yang 2022 for a recent application): 
	\vspace{0.5cm}
	\begin{quote}   	
	Proposed as a solution for the productivity problem, the Tolerance Principle	must be a component of the human cognitive system that governs all linguistic matters regardless of time or place. One approach to validation, which I have pursued in the preceding pages, is to examine a wide range of crosslinguistic phenomena and the processes by which children acquire them. In those cases, the target state of the linguistic systems is fairly well understood, which forms	a stable point of comparison against the predictions of the Tolerance Principle. Yet another, and possibly more interesting, application can be found in his-
	torical linguistics. I contend that the Tolerance Principle is a causal force that shapes the history of languages. As such, it has the potential of uncovering the deterministic factors in language change. 
	\end{quote} 
	\vspace{0.5cm}
	If the analytical basis of the NB framework is true, that is, language acquisition is the main mechanism through which linguistic change occurs and propagates, and if the TP is indeed a key cognitive component of acquisition, then Yang's thesis that the TP is a causal force that shapes the history of languages is an inevitable conclusion. Motivated by these two premisses, we will examine the dynamical implications of the TP using the techniques from the NB framework. As far as I know, the iterated maps derived in the next section are the first explicit dynamical systems of language based on the TP.

	\section{The Dynamics of Language for TP Learners} \label{mainsection}
	
	Let $R$ be a rule defined as in the previous section. Denote by $D_{N}^{R, L}$ a data set from language $L$ consisting of $N$ pairs of the form $(x, S(x))$, where each $x$ falls into the structural category evaluated by $R$, that is, the IF part of $R$ is satisfied by $x$. If $S(x)=R(x)$, the data point $(x,S(x))$ is evidence of $R$ application; if $S(x)\neq R(x)$, then $(x,S(x))$ is an exception to $R$ in $L$. Henceforth, we leave $L$ implicit to avoid clutter and denote the data sets by $D_{N}^{R}$. Given a data set $D_{N}^{R}$, the task of a learner $\mathcal{A}$ will be to decide between $R^{+}$ (holding the rule productive) or $R^{-}$ (dispensing with the rule). Then we can naturally define Tolerance Principle learners:
	
	\begin{definition} \label{TPlearners} (TP Learners) An algorithm $\mathcal{A}$ is a TP learner with respect to a rule $R$ if it is defined as follows:
		
	\begin{align*}
	\mathcal{A}(D_{N})=R^{+}, \; &\text{if} \; e \leq N/\ln N  \\
	\mathcal{A}(D_{N})=R^{-}, \; &\text{otherwise}. 
	\end{align*}

	\end{definition}
	
	Denote by $\alpha_{t}$ the proportion of $R^{+}$ speakers in generation $t$. Thus, $1-\alpha_{t}$ is be the proportion of speakers in generation $t$ that dispensed with $R$ (adopting some other rule). Let also $p_{t,R^{+}}(e)$ stand for the probability with which an $R^{+}$ speaker from generation $t$ produces an exception to $R$ (one of the exceptions listed prior to the rule in the serial Elsewhere Condition Model). Similarly, $p_{t,R^{+}}(R)$ will denote the probability with which an $R^{+}$ speaker from generation $t$ produces an output of $R$. Hence, $p_{t,R^{+}}(e)=1-p_{t,R^{+}}(R)$. Analagously, $p_{t,R^{-}}(e)$ and $p_{t,R^{-}}(R)$ define these probabilities for $R^{-}$ speakers. As a starting point, we derive an update rule $\alpha_{t}$ under the following simplifying assumptions: \\
	
	\begin{itemize}[noitemsep,nolistsep]
		\item[$(i)$] The population is infinite. 
		\item[$(ii)$] The data points $d_{i} \in D_{N}^{R}$ are drawn in $i.i.d.$ fashion from the environment. 
		\item[$(iii)$] The probabilities $p_{t, R^{+}}(e)$ and $p_{t,R^{-}}(e)$ are constant through all generations. Since there is no dependence on $t$, henceforth we simply denote $p_{t, R^{+}}(e):=p_{R^{+}}(e)$ and $p_{t,R^{-}}(e):=p_{R^{-}}(e)$.   
		\item[$(iv)$] Every learner $\mathcal{A}_{t+1}$ receives data from $R^{+}$ and $R^{-}$ speakers from the previous generation in the proportion $\alpha_{t}$ and $1-\alpha_{t}$, respectively.
		\item[$(v)$] Every learner receives data from the previous generation only. 
	\end{itemize}   
	\vspace{0.5cm}
	\begin{theorem} \label{basicTPdynamics} Under assumptions $(i)$ to $(v)$, the dynamics of TP learners is given by
	\[
	\alpha_{t+1}=\sum_{k=0}^{\lfloor N/\ln N \rfloor} \binom{N}{k}[\alpha_{t}p_{R^{+}}(e)+(1-\alpha_{t})p_{R^{-}}(e)]^{k}[\alpha_{t}p_{R^{+}}(R)+(1-\alpha_{t})p_{R^{-}}(R)]^{N-k}.
	\] 	
	\end{theorem}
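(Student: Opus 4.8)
The plan is to exploit the central heuristic (\ref{mainheuristics}) of the NB framework: in the infinite-population regime of assumption $(i)$, the proportion $\alpha_{t+1}$ of $R^{+}$ speakers in generation $t+1$ coincides exactly with the probability $\mathbb{P}(\mathcal{A}_{t+1}=R^{+})$ that a single learner, drawing its data from generation $t$, ends up classifying $R$ as productive. The proportionality $\thicksim$ in (\ref{mainheuristics}) becomes an equality here because, with infinitely many learners receiving independent data sets, a law-of-large-numbers argument identifies the empirical fraction of $R^{+}$ learners with the underlying per-learner probability. Thus the whole problem reduces to computing $\mathbb{P}(\mathcal{A}_{t+1}=R^{+})$, and by Definition \ref{TPlearners} this is precisely the probability that the number of exceptions $e$ observed in the learner's data set $D_{N}^{R}$ satisfies $e \leq N/\ln N$. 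The key object to determine is therefore the distribution of $e$.

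First I would compute the probability that a single data point presented to the learner is an exception. By assumptions $(iv)$ and $(v)$, each point originates from a generation-$t$ speaker that is of type $R^{+}$ with probability $\alpha_{t}$ and of type $R^{-}$ with probability $1-\alpha_{t}$; conditional on the source type, the point is an exception with probability $p_{R^{+}}(e)$ or $p_{R^{-}}(e)$ by assumption $(iii)$. The law of total probability then gives a single per-point exception probability
\[
q := \alpha_{t}\,p_{R^{+}}(e)+(1-\alpha_{t})\,p_{R^{-}}(e),
\]
with complementary probability $1-q = \alpha_{t}\,p_{R^{+}}(R)+(1-\alpha_{t})\,p_{R^{-}}(R)$ of observing a genuine application of $R$ (the two sum to $1$ since $p_{R^{+}}(e)+p_{R^{+}}(R)=1$ and likewise for $R^{-}$). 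Because the $N$ data points are drawn i.i.d. by assumption $(ii)$, the count $e$ of exceptions among them is a sum of $N$ independent Bernoulli$(q)$ trials, so $e$ is $\mathrm{Binomial}(N,q)$ distributed and $\mathbb{P}(e=k)=\binom{N}{k}q^{k}(1-q)^{N-k}$.

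It then remains to sum this binomial mass below the tolerance threshold. Since $e$ is integer-valued, the productivity condition $e \leq N/\ln N$ is equivalent to $e \leq \lfloor N/\ln N \rfloor$, so
\[
\alpha_{t+1}=\mathbb{P}\bigl(e \leq \lfloor N/\ln N\rfloor\bigr)=\sum_{k=0}^{\lfloor N/\ln N\rfloor}\binom{N}{k}q^{k}(1-q)^{N-k},
\]
and substituting the expressions for $q$ and $1-q$ yields the stated formula. The computation is short; the part that deserves the most care is the justification of the two modelling steps that underlie it, namely (a) the promotion of the heuristic $\thicksim$ to a genuine equality, which is exactly where the infinite-population hypothesis $(i)$ is indispensable, and (b) the reduction of the per-point exception probability to the single mixed value $q$, which requires that the i.i.d. sampling of $(ii)$ be read as pooled sampling over the mixed population of $(iv)$ rather than, say, drawing all $N$ points from one randomly chosen speaker. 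Once these are granted, the Binomial structure and the floor in the summation index follow immediately.
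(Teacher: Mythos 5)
Your proof is correct and takes essentially the same route as the paper's: both identify $\alpha_{t+1}$ with the single-learner convergence probability $\mathbb{P}(\mathcal{A}_{t+1}(D_{N}^{R})=R^{+})$ via the infinite-population assumption $(i)$, reduce the per-point exception probability to the mixture $q=\alpha_{t}p_{R^{+}}(e)+(1-\alpha_{t})p_{R^{-}}(e)$ using $(iii)$--$(v)$, and then sum the resulting binomial mass $\binom{N}{k}q^{k}(1-q)^{N-k}$ over $k=0,\dots,\lfloor N/\ln N\rfloor$. If anything, you are slightly more explicit than the paper on two points it leaves implicit --- the promotion of the heuristic $\thicksim$ to an equality and the legitimacy of the floor in the summation index since the exception count is integer-valued --- but the underlying argument is identical.
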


\begin{proof}
	Let $\mathbb{P}(\mathcal{A}(D_{N}^{R})=R^{+})$ denote the probability with which a learner $\mathcal{A}$ converges to $R^{+}$. Since the population is infinite $(i)$ and the data for every learner is drawn from the same distribution $(iv)$, we have that $\alpha_{t+1}=\mathbb{P}(\mathcal{A}_{t+1}(D_{N}^{R})=R^{+})$. Therefore, we only need to compute this probability. We do this using basic combinatorics under assumptions $(ii)$ to $(v)$. Indeed, by $(iv)$ and $(v)$, we know that the probability of an exception being presented to a learner from generation $t+1$ is 
	\[
	\alpha_{t}p_{R^{+}}(e)+(1-\alpha_{t})p_{R^{-}}(e).
	\]
	Thus, the probability of a particular data set $D_{N}^{R}$ with $k$ exceptions that is drawn $i.i.d$ ($ii$) is
	\[
	[\alpha_{t}p_{R^{+}}(e)+(1-\alpha_{t})p_{R^{-}}(e)]^{k}[\alpha_{t}p_{R^{+}}(R)+(1-\alpha_{t})p_{R^{-}}(R)]^{N-k}.
	\]
	Since $k$ exceptions can show up in $\binom{N}{k}$ different ways in a sequence of $N$ data points, we multiply the previous term by the binomial coefficient to obtain the probability for all $D_{N}^{R}$ with exactly $k$ exceptions. Finally, we sum over the number of exceptions from $k=0$ to $k= \lfloor N/\ln N \rfloor$ to account for all the possibilities that make $\mathcal{A}$ converge to $R^{+}$, giving us the expression in the theorem. Note that $(iii)$ assures us that $\alpha_{t}$ is the only variable, thus guaranteeing the dynamics.  
\end{proof}

\begin{remark} \label{variantfrequency1} (Relation to Variant Frequency - Comparison to NB Models)
	Suppose $R$ application denotes one variant $V_{1}$ and the exceptions represent a second variant $V_{2}$. Therefore, the proportion $\alpha_{t}$ of $R^{+}$ speakers from generation $t$, to which the dynamics refer in our notation, do not refer directly to the frequency with which $V_{1}$ is applied. Rather, this frequency is given by 
	\begin{equation} \label{ruleproductivityandvariantfrequency}
		\alpha_{V_{1},t}=\alpha_{t}p_{R^{+}}(R)+(1-\alpha_{t})p_{R^{-}}(R),
	\end{equation}
	with the evolution of $\alpha_{t}$ provided by the update map. This is essential to consider when comparing simulations with historical data for all dynamics derived in these notes. In the NB models, the notation $\alpha_{t}$ often embodies the frequencies directly. 
\end{remark}
	
	Although less conceptually explicit, note that the dynamics can also be expressed in a more computation-friendly way as:
	\begin{equation} \label{basicTPdynamics2}
	\alpha_{t+1}=\sum_{k=0}^{\lfloor N/\ln N \rfloor} \binom{N}{k}[\alpha_{t}p_{R^{+}}(e)+(1-\alpha_{t})p_{R^{-}}(e)]^{k}[1-(\alpha_{t}p_{R^{+}}(e)+(1-\alpha_{t})p_{R^{-}}(e))]^{N-k}.
	\end{equation}
	
	Fixing $N$ for the size of the data set that will decide the convergence of a learner, we see from these expressions  that there are two parameters in this model: $p_{t}^{R^{+}}(e)$ and $p_{t}^{R^{-}}(e)$. Some constraints are expected to hold empirically: \\
	
	\begin{itemize}[noitemsep,nolistsep]
		\item[(a)]  $p_{R^{-}}(e)>p_{R^{+}}(e)$, which is equivalent to $p_{R^{+}}(R)>p_{R^{-}}(R)$.
		\item[(b)] $p_{R^{+}}(R)>p_{R^{+}}(e)$. 
		\item[(c)] $p_{R^{-}}(e)>p_{R^{-}}(R)$
	\end{itemize}   
	\vspace{0.5cm}
	However, at this general abstract level, none of these relations should be assumed to always hold, especially considering the type vs. token decision to be considered for data sets and also their size $N$ in the context of language acquisition. Moreover, one path for change from a majorly rule-preserving population to a population that dispenses with a previously productive rule may be open precisely when one or more of these relations do not hold. Nevertheless, we show a simulation below in which such a transition occurs while all the above constraints hold as well as $p_{R^{+}}(R)>p_{R^{-}}(e)$.   
	
	\begin{figure}[htbp]
		\centering
		\includegraphics[scale=.5]{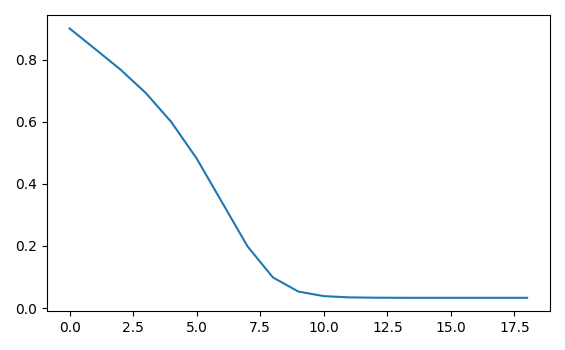}
		\caption{$N=9$, $\alpha_{t}=0.9$, $p_{R^{+}}(e)=0.2$, $p_{R^{-}}(e)=0.7$}
		\label{fig1}
	\end{figure}	

	We see that for learners that converge for small data sets of size $9$ and parameters $p_{t}^{R^{+}}(e)=0.2$ and $p_{t}^{R^{-}}(e)=0.7$, an initial population in which $90\%$ of speakers use $R$ productively transitions in about $10$ generations to a stable fixed point where only a small fraction of  speakers holds $R$ productive. The curve follows the well-known $S-$shaped pattern from historical linguistics, in which change is initially slow, then accelerates for some generations before changing slowly again towards a regular state. 	
	
	We initiate a general qualitative analysis of the dynamics from Theorem (\ref{basicTPdynamics}) by considering homogeneous populations. 
	
	\textbf{Homogeneous populations:} Let us consider two general cases in which the population consists entirely of productive $R^{+}$ speakers ($\alpha=1$) or entirely of $R^{-}$ speakers ($\alpha=0$). \\
	
	If all speakers in the population are $R^{+}$, then 
	\[
	f(1)=\sum_{k=0}^{\lfloor N/\ln N \rfloor} \binom{N}{k}[ p_{R^{+}}]^{k}[1-p_{R^{+}}]^{N-k}.
	\]
	
	Note that $f(1)=1$ if and only if $p_{R^{+}}=0$ (to see the first implication, differentiate $f(1)$ with respect to $p_{R^{+}}$), so the only chance for the population to remain homogeneous with respect to the productivity of a rule $R$ is if the probability of producing an exception is zero. Therefore, any exception in the environment leads to mixed populations. Also note that $f(1)=0$ if and only if $p_{R^{+}}=1$, that is, if $R^{+}$ speakers never apply the rule, which is completely unrealistic as a choice of parameter. Therefore, there is no chance of going from one homegeneous generation of $R^{+}$ speakers to a homogenous population of $R^{-}$ in a single generation. Intermediate mixed populations are predicted by the dynamics. The analysis for $\alpha=0$ is symmetric with respect to the parameter $p_{R^{-}}$. Precisely, $f(0)=0$ iff  $p_{R^{-}}=1$ and $f(0)=1$ iff $p_{R^{-}}=0$ (which, again, is not a relevant parametric choice). To summarize the main points: \\
	
	\begin{itemize}[noitemsep,nolistsep]
		\item[-] A homogeneous population of $R^{+}$ ($R^{-}$) speakers will necessarily become mixed unless no exception (rule application) is found in the data. 
		
		\item[-] A homogenous population of $R^{+}$ ($R^{-}$) speakers cannot shift entirely to a homogeneous population of $R^{-}$ ($R^{+}$) speakers in a single generation. \\ 
	\end{itemize}   
	
	The first item predicts that unless a rule $R$ has absolutely no exceptions, as theoretically expected from, say, the evaluation of a UG principle, then change is inevitable as the productive status of a rule becomes mixed in the population. Since most rules, even productive ones, do have exceptions, this offers a formal explanatory route  for Labov's observation, cited in the beginning of these notes, that most things in language change except for a bare skeleton of unchangeable abstract relations (e.g. capacity for recursion). The second property predicts that an entire population that holds a rule productive cannot change into a homogeneous population of speakers that hold the same rule unproductive at the turn of a single generation. This should be expected as a necessary condition for any candidate model of linguistic change. Although this property alone is not sufficient for stating that a model has explanatory power over the actual dynamics of language, it is certainly necessary. These dynamics differ from the general Cue-Based Learners analysed by Niyogi (2006, p.173) which cannot generate spontaneous change from a homogeneous composition of learners since $\alpha=0$ and $\alpha=1$ are both stable fixed points in those models.  
	

	\subsection{Stochastic dynamics (finite populations)}
	
	Let us consider now the case of a finite population. We replace assumption $(ii)$ from the previous section by: \\
	\begin{itemize}[noitemsep,nolistsep]
		\item[$(ii)^{*}$] The population is finite, consisting of $S$ adult speakers. 
	\end{itemize} 
	\vspace{0.5cm}
	We follow Niyogi's presentation (2006, p.306, section 10.1.1) with the particular choice of TP learners as our algorithm here. Let us represent the linguistic configuration of a generation $t$ as a vector $(X_{1}(t),...,X_{S}(t))$ with $X_{i}(t)\in\{0,1\}$. Here, $X_{i}(t)=1$ will mean that the speaker $i$ from generation $t$ is $R^{+}$ and $X_{i}(t)=0$ means that $i-$th speaker from gen $t$ is $R^{-}$. Thus, the proportion of $R^{+}$ speakers from gen $t$ can be written as
	\begin{equation} \label{averagelinguisticbehaviour}
		Y(t)=\frac{1}{N}\sum_{i=1}^{N}X_{i}(t).
	\end{equation}   

	So, $Y(t) \in \{\frac{0}{S}, \frac{1}{S},...,\frac{S}{S}\}$. Note that $Y(t)$ is just a new notation for $\alpha_{t}$ from the previous section. Each $X_{i}(t+1)$ is a random variable. Assuming $(ii)^{*}$ together with the other four hypotheses as before, we already know from the proof of Theorem \ref{basicTPdynamics} that
	
	\begin{equation} \label{probconvergencetoR+}
	P(X_{i}(t+1)=1)=f(Y(t),N):=\sum_{k=0}^{\lfloor N/\ln N \rfloor} \binom{N}{k}[P(Y(t))]^{k}[1-P(Y(t))]^{N-k},
	\end{equation}
	where $P(Y(t))=Y(t)p_{R^{+}}(e)+(1-Y(t))p_{R^{-}}(e)$. 
	\vspace{0.5cm}
	
	Therefore, the probability for the average linguistic behaviour $Y(t+1)$ to take a certain value is solely determined by $Y(t)$.
	
	As usual for finite population analysis, in what follows, we also assume the following simplifying condition which is implicit in Niyogi's analysis (Niyogi, 2006, p.307): \\
	\begin{itemize}[noitemsep,nolistsep]
		\item[$(ii)^{*}(b)$] The size $S$ of the population remains constant across generations.  
	\end{itemize}
	\vspace{0.5cm}
	Although this hypothesis is extremely unrealistic with respect to global populations/general language communities, it can serve as approximation for certain bounded physical spaces, such as particular neighbourhoods, for a number of generations. Spatial models will be considered in future notes.
	
	Finally, we turn to the evolution of the average linguistic composition. Let $T_{ij}$ represent the probability with which the population transitions from state $Y(t)=\frac{i-1}{S}$ to $Y(t+1)=\frac{j-1}{S}$, for any $1 \leq j, i \leq S+1$. This defines the transition matrix for the evolution of the average linguistic behaviour, which can be viewed as a Markov chain with $S+1$ states. Hence, 
	\begin{proposition}
		Under assumptions $(ii)^{*}$, $(ii)^{*}(b)$, and $(i), (iii), (iv), (v)$, the transition matrix for the average linguistic behaviour of a finite population of TP learners is given by
		\[
		T_{ij}=\binom{S}{j-1}f(\frac{i-1}{S},N)^{j-1}\bigg(1-f(\frac{i-1}{S},N)\bigg)^{S-(j-1)},
		\]
	with $f$ defined as in equation $(\ref{probconvergencetoR+})$.
	\end{proposition}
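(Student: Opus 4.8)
The plan is to exhibit $Y(t+1)$ as a sum of $S$ conditionally independent Bernoulli trials given $Y(t)$, so that the one-step transition reduces to a single binomial count; the formula then follows immediately from equation~(\ref{probconvergencetoR+}), which already supplies the per-learner success probability. In effect the model has two nested binomial layers: the inner one (inside $f$) counts exceptions within a single learner's sample of size $N$, and the outer one counts how many of the $S$ learners end up $R^{+}$. The proposition is only about the outer layer.

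First I would fix a source state $Y(t)=\frac{i-1}{S}$ and recall from the derivation of equation~(\ref{probconvergencetoR+}) that each learner of generation $t+1$ converges to $R^{+}$ with probability
\[
p:=f\!\left(\tfrac{i-1}{S},N\right),
\]
a quantity depending on the previous generation only through its proportion $Y(t)$ of $R^{+}$ speakers. Thus $X_{k}(t+1)\sim\mathrm{Bernoulli}(p)$ for each $k=1,\dots,S$. The key structural step is to argue that, conditioned on $Y(t)=\frac{i-1}{S}$, the variables $X_{1}(t+1),\dots,X_{S}(t+1)$ are mutually independent: each of the $S$ learners draws its own data set $D_{N}^{R}$ independently from the common environment determined by $Y(t)$, and its verdict $R^{+}/R^{-}$ is a deterministic function (the threshold of Definition~\ref{TPlearners}) of that private sample alone. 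Hence the $S$ verdicts are i.i.d.\ $\mathrm{Bernoulli}(p)$ and their sum $\sum_{k=1}^{S}X_{k}(t+1)$ is $\mathrm{Binomial}(S,p)$. I would also note explicitly that, because $p$ depends on the configuration $(X_{1}(t),\dots,X_{S}(t))$ only through the scalar proportion $Y(t)$, the process $Y(t)$ is genuinely Markov on the $S+1$ states $\{0,\tfrac1S,\dots,1\}$, with $(iii)$ guaranteeing time-homogeneity.

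It then remains to read off the transition probability. The event $Y(t+1)=\frac{j-1}{S}$ is exactly the event that $j-1$ of the $S$ learners converge to $R^{+}$, i.e.\ $\sum_{k=1}^{S}X_{k}(t+1)=j-1$. By the binomial distribution of this sum,
\[
T_{ij}=P\!\left(\textstyle\sum_{k}X_{k}(t+1)=j-1 \;\Big|\; Y(t)=\tfrac{i-1}{S}\right)=\binom{S}{j-1}p^{\,j-1}(1-p)^{S-(j-1)},
\]
and substituting $p=f(\frac{i-1}{S},N)$ yields the stated formula.

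The main obstacle is not computational — the count is routine — but the conceptual justification of the two reductions invoked above: the conditional independence of the $S$ learners' verdicts given $Y(t)$, and the collapse of the full configuration dynamics $(X_{1},\dots,X_{S})$ to the scalar proportion $Y(t)$. Both rest on the private-sample structure together with $(ii)^{*}$ and the i.i.d.\ sampling; once these are granted, the binomial form of $T_{ij}$ is forced and no further estimation is required.
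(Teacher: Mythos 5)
Your proof is correct and follows exactly the reasoning the paper intends: the paper states this proposition without a separate proof, treating it (following Niyogi 2006, \S 10.1.1) as immediate from the fact that, given $Y(t)=\frac{i-1}{S}$, each of the $S$ learners independently converges to $R^{+}$ with probability $f(\frac{i-1}{S},N)$ from equation~(\ref{probconvergencetoR+}), so the next-generation count is $\mathrm{Binomial}(S,f(\frac{i-1}{S},N))$. Your explicit flagging of the conditional independence of the learners' private samples and of the collapse to the scalar $Y(t)$ (which makes the chain Markov and, via $(iii)$, time-homogeneous) simply makes precise what the paper leaves implicit, so there is no substantive difference in approach.
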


	\subsection{Learning from three generations}
	
	Next, we make a more realistic assumption on the number of previous generations that provide learners with data. If the current population of learners/children is $t+1$, then the population of living adults will consist of a proportion of speakers from generation $t$, another proportion of adults from generation $t-1$, and so on. We will refer to the proportion of living adults from generation $s$ in a population as $Pop(s)$. 
	
	Now we make the following alternative assumption to hypothesis $(v)$ and add a symplifying condition related to this alternative:  \\
	
	\begin{itemize}[noitemsep,nolistsep]
		\item[$(v)^{*}$] Every learner receives data from three previous generations.
		\item[$(v)^{*}(b)$] For each new generation, the proportion of speakers from each of the three previous generations in the population does not change. More precisely, for any generation $t+1$, $Pop(t-2)=A$, $Pop(t-1)=B$,  and $Pop(t)=C$, for constants $A$, $B$ and $C$, with $A+B+C=1$.   
	\end{itemize}   
	\vspace{0.5cm}

	For a generation of children $t+1$, we can think of $t-2$, $t-1$ and $t$ as representing the generations of the grandparents, parents, and older siblings, respectively. Thus, $(v)*(b)$ states that the proportion of these groups does not change for any $t$. For example, the percentage of grandparents with respect to the other two generations will always be a constant $A$. The arguments of this section can be trivially adapted for any number $M$ of previous generations providing learners with data, although going beyond, say, $M=4$, has little empirical significance.       
	
	Under these new assumptions, we derive the appropriate evolutionary dynamics for TP learners that receive data from three previous generations.  

	\begin{proposition} \label{3genTPdynamics} Under assumptions $(i)$ to $(iv)$, $(v)*$ and $(v)*(b)$, the dynamics of TP learners is given by
			\[
			\alpha_{t+1}=\sum_{k=0}^{\lfloor N/\ln N \rfloor} \binom{N}{k}[Ag_{t-2}^{e}+Bg_{t-1}^{e}+Cg_{t}^{e}]^{k}[Ag_{t-2}^{R}+Bg_{t-1}^{R}+Cg_{t}^{R}]^{N-k},
			\] 	
			where $g_{s}^{x}:=\alpha_{s}p_{R^{+}}(x)+(1-\alpha_{s})p_{R^{-}}(x)$. 
	\end{proposition}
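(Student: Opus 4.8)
The plan is to reduce the argument to the proof of Theorem~\ref{basicTPdynamics}, since the only hypothesis that has changed is the sampling scheme for an individual data point: assumption $(v)$ is replaced by $(v)^*$ together with $(v)^*(b)$, while $(i)$--$(iv)$ are retained. First I would note that, by $(i)$ and the fact that all learners draw from the same distribution, the identity $\alpha_{t+1}=\mathbb{P}(\mathcal{A}_{t+1}(D_N^R)=R^+)$ carries over unchanged, so the whole task is again to compute the probability that a TP learner of generation $t+1$ converges to $R^+$.

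The key step is to recompute the probability that a single i.i.d.\ data point is an exception under the three-generation scheme, and I would do this by the law of total probability, conditioning on the generation from which the point originates. By $(v)^*(b)$ a point comes from generation $t-2$, $t-1$, or $t$ with the fixed probabilities $A$, $B$, $C$ respectively; conditioned on origin in generation $s$, the same computation used in the proof of Theorem~\ref{basicTPdynamics}---applying $(iv)$ within that generation and $(iii)$ to make the per-speaker production probabilities generation-independent---shows the point is an exception with probability $g_s^e=\alpha_s p_{R^+}(e)+(1-\alpha_s)p_{R^-}(e)$. Summing over the three origins yields $A g_{t-2}^e+B g_{t-1}^e+C g_t^e$ for an exception and, symmetrically, $A g_{t-2}^R+B g_{t-1}^R+C g_t^R$ for a rule application.

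Before proceeding I would verify that these two aggregate probabilities form a valid Bernoulli pair, i.e.\ that they sum to $1$: since $p_{R^+}(e)+p_{R^+}(R)=1$ and $p_{R^-}(e)+p_{R^-}(R)=1$, we have $g_s^e+g_s^R=1$ for each $s$, so the two sums add to $A+B+C=1$ by $(v)^*(b)$. With this in hand the remainder repeats the base case verbatim: by $(ii)$ the number of exceptions among the $N$ drawn points is binomial with the above success probability; the $\binom{N}{k}$ arrangements of $k$ exceptions supply the binomial coefficient; and Definition~\ref{TPlearners} tells us the learner outputs $R^+$ exactly when the number of exceptions is at most $\lfloor N/\ln N\rfloor$. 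Summing the binomial mass over $k=0,\dots,\lfloor N/\ln N\rfloor$ produces the stated expression for $\alpha_{t+1}$.

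I do not anticipate a serious obstacle, as this is a direct generalization of the base dynamics; the one point requiring genuine care is the marginalization over generations---ensuring that, after conditioning on the contributing generation, each data point still reduces to a single exception-versus-rule Bernoulli trial whose two outcome probabilities sum to $1$, which is precisely what preserves the binomial structure. A secondary conceptual remark worth flagging is that the resulting recurrence is third-order in $\alpha$, since it depends on $\alpha_t$, $\alpha_{t-1}$, and $\alpha_{t-2}$; the genuine dynamical system therefore lives on a three-dimensional state obtained in the standard way by tracking the last three generations jointly.
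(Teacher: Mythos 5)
Your proof is correct and follows essentially the same route as the paper, which itself simply says to repeat the argument of Theorem~\ref{basicTPdynamics} while accounting for the fact that an exception may come from any of the three generations; your law-of-total-probability conditioning and the check that $g_s^e+g_s^R=1$ are exactly the details the paper leaves implicit. Your closing remark that the recurrence is third-order (so the genuine dynamical system lives on a three-dimensional state tracking $\alpha_t$, $\alpha_{t-1}$, $\alpha_{t-2}$) is a correct and worthwhile observation the paper does not make explicit.
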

	
	\begin{proof} To see this is true, we only need to consider the same argument from Theorem \ref{basicTPdynamics} and then take into account that an exception may now be provided by speakers from generations $A$, $B$ or $C$. 
	\end{proof}
	
	In terms of the canonical inner product and the probability complement, the dynamics can also be written as
	\begin{equation} \label{TPdynamics3gen}
		\alpha_{t+1}=\sum_{k=0}^{\lfloor N/\ln N \rfloor} \binom{N}{k} [(A,B,C)\cdot (g_{t-2}^{e}, g_{t-1}^{e}, g_{t}^{e})]^{k}[1-(A,B,C)\cdot (g_{t-2}^{e}, g_{t-1}^{e}, g_{t}^{e})]^{N-k}.
	\end{equation}
	
	\appendix
	\section*{Appendix}
	\addcontentsline{toc}{section}{Appendices}
	\renewcommand{\thesubsection}{\Alph{subsection}}
	\subsection{First derivative for stability analysis of the base dynamics from Theorem \ref{basicTPdynamics}.}
	
	For this appendix, we let
	\begin{equation} \label{basedynamicsfunction}
		f(\alpha)=\sum_{k=0}^{\lfloor N/\ln N \rfloor} \binom{N}{k}[\alpha p_{R^{+}}+(1-\alpha)p_{R^{-}}]^{k}[1-(\alpha p_{R^{+}}+(1-\alpha)p_{R^{-}})]^{N-k}.
	\end{equation}
	
	Then,
	\begin{align*}
	f'(\alpha)=&\sum_{k=0}^{\lfloor N/\ln N \rfloor} \binom{N}{k} k [\alpha p_{R^{+}}+(1-\alpha)p_{R^{-}}]^{k-1}(p_{R^{+}}-p_{R^{-}})[1-(\alpha p_{R^{+}}+(1-\alpha)p_{R^{-}})]^{N-k} + \\
	&\binom{N}{k}[\alpha p_{R^{+}}+(1-\alpha)p_{R^{-}}]^{k} (N-k)[1-(\alpha p_{R^{+}}+(1-\alpha)p_{R^{-}})]^{N-k-1}(p_{R^{-}}-p_{R^{+}}).
	\end{align*}
	
	Expanding the binomial coefficient and factoring out some common terms, we have:
	\begin{align*}
		f'(\alpha)&=\sum_{k=0}^{\lfloor N/\ln N \rfloor} \frac{N!}{(N-k)!k!} k [\alpha p_{R^{+}}+(1-\alpha)p_{R^{-}}]^{k-1}(p_{R^{+}}-p_{R^{-}})[1-(\alpha p_{R^{+}}+(1-\alpha)p_{R^{-}})]^{N-k} - \\
		&\frac{N!}{(N-k)!k!}[\alpha p_{R^{+}}+(1-\alpha)p_{R^{-}}]^{k} (N-k)[1-(\alpha p_{R^{+}}+(1-\alpha)p_{R^{-}})]^{N-k-1}(p_{R^{+}}-p_{R^{-}}) \\
		&=N(p_{R^{+}}-p_{R^{-}}) \bigg[\sum_{k=1}^{\lfloor N/\ln N \rfloor} \frac{(N-1)!}{(N-k)!(k-1)!}  [\alpha p_{R^{+}}+(1-\alpha)p_{R^{-}}]^{k-1}[1-(\alpha p_{R^{+}}+(1-\alpha)p_{R^{-}})]^{N-k} \bigg] - \\
		&N(p_{R^{+}}-p_{R^{-}}) \sum_{k=0}^{\lfloor N/\ln N \rfloor}\bigg[ \frac{(N-1)!}{(N-k-1)!k!}[\alpha p_{R^{+}}+(1-\alpha)p_{R^{-}}]^{k} [1-(\alpha p_{R^{+}}+(1-\alpha)p_{R^{-}})]^{N-k-1} \bigg]. 
	\end{align*}

	\vspace{0.5cm}
	Let $P(\alpha):=\alpha p_{R^{+}}+(1-\alpha)p_{R^{-}}$, so we rewrite it more compactly as:
	\begin{align*}
		f'(\alpha)&=N(p_{R^{+}}-p_{R^{-}}) \bigg[\sum_{k=1}^{\lfloor N/\ln N \rfloor} \frac{(N-1)!}{(N-k)!(k-1)!}  P(\alpha)^{k-1}[1-P(\alpha)]^{N-k} \bigg] - \\
		&N(p_{R^{+}}-p_{R^{-}}) \bigg[ \sum_{k=0}^{\lfloor N/\ln N \rfloor} \frac{(N-1)!}{(N-k-1)!k!}P(\alpha)^{k} [1-P(\alpha)]^{N-k-1} \bigg]. 
	\end{align*}

	\vspace{0.5cm}
	Cancelling terms:
	\begin{align*}
		f'(\alpha)&=-N(p_{R^{+}}-p_{R^{-}})\frac{(N-1)!}{(N-\lfloor N/\ln N \rfloor-1)!\lfloor N/\ln N \rfloor!}P(\alpha)^{\lfloor N/\ln N \rfloor}[1-P(\alpha)]^{N-\lfloor N/\ln N \rfloor -1}.
	\end{align*}
	
	Hence,
	\begin{equation} \label{derivativebasedynamics}
		f'(\alpha)=(p_{R^{-}}-p_{R^{+}})N\binom{N-1}{\lfloor N/\ln N \rfloor}P(\alpha)^{\lfloor N/\ln N \rfloor}(1-P(\alpha))^{N-\lfloor N/\ln N \rfloor -1}.
	\end{equation}
	
	\vspace{0.5cm}
	
	The function from $(\ref{basedynamicsfunction})$ and its derivative in $(\ref{derivativebasedynamics})$ will be used for the qualitative analysis and simulations of the base dynamics from Theorem $(\ref{basicTPdynamics})$.

\end{document}